\newtheorem{thm}{Theorem}
\newcommand{\captionfonts}{\normalsize}
\long\def\@makecaption#1#2{%
  \vskip\abovecaptionskip
  \sbox\@tempboxa{{\captionfonts #1: #2}}%
  \ifdim \wd\@tempboxa >\hsize
    {\captionfonts #1: #2\par}
  \else
    \hbox to\hsize{\hfil\box\@tempboxa\hfil}%
  \fi
  \vskip\belowcaptionskip}
\newcommand\todo[1]{{#1}}
\begin{document}
\hspace{13.9cm}1

\ \vspace{20mm}\\

{\LARGE Using \todo{inspiration from} synaptic plasticity rules to optimize traffic flow in distributed engineered networks}

\ \\
{\bf \large Jonathan Y.\@ Suen$^{\displaystyle 1}$ and Saket Navlakha$^{\displaystyle 2}$}\\
{$^{\displaystyle 1}$Duke University, Department of Electrical and Computer Engineering.\\  Durham, North Carolina, 27708 USA}\\
{$^{\displaystyle 2}$The Salk Institute for Biological Studies. Integrative Biology Laboratory.\\10010 North Torrey Pines Rd., La Jolla, CA 92037 USA}\\
%

{\bf Keywords:} synaptic plasticity, distributed networks, flow control

\thispagestyle{empty}
\markboth{}{NC instructions}
\ \vspace{-0mm}\\
%
\begin{center} {\bf Abstract} \end{center}

Controlling the flow and routing of data is a fundamental problem in many distributed networks, including transportation systems, integrated circuits, and the Internet. In the brain, synaptic plasticity rules have been discovered that regulate network activity in response to environmental inputs, which enable circuits to be stable yet flexible. Here, we develop a new neuro-inspired model for network flow control that only depends on modifying edge weights in an activity-dependent manner. We show how two fundamental plasticity rules (long-term potentiation and long-term depression) can be cast as a distributed gradient descent algorithm for regulating traffic flow in engineered networks. We then characterize, both via simulation and analytically, how different forms of edge-weight update rules affect network routing efficiency and robustness. We find a close correspondence between certain classes of synaptic weight update rules derived experimentally in the brain and rules commonly used in engineering, suggesting common principles to both.

\section{Introduction}

 
In many engineered networks, a payload needs to be transported between nodes without central control. These systems are often represented as weighted, directed graphs. Each edge has a fixed \emph{capacity}, which represents the maximum amount of traffic the edge can carry at any time. Traffic in these networks consist of a series of \emph{flows}, each of which contains some amount of data that originates at a source node and attempts to reach a target node via a path in the network. For example, in vehicular transportation systems (nodes are intersections, edges are roads), cars travel from one location to another, and each road has a capacity that limits the number of cars that can traverse the road at once. In network-on-a-chip circuits (nodes are components such as CPU and GPU cores equipped with extremely basic routers, and edges are circuit wiring between cores), tiny units of data called flits flow through the circuit, and each link can only transport a limited number of flits at a time~\citep{Cota2012}. On the Internet, packets navigate from one host to another, and each communication link has a capacity that constrains the rate of data flow through the link. In all these cases, the goal is to optimize performance metrics, including how long it takes for data to reach the target and how long data is queued or lost along the way due to exceeding capacities.


\todo{There are two primary services required in these networks: routing and flow control. Network routing refers to moving a payload from a source node in the network to a target node through some path~\citep{Royer1999}, often maintained using a distributed routing table~\citep{Gavoille2001}. The majority of networks always send traffic along the shortest path from source to target, where shortest refers to physical distance or transit time. In this paper, we also assume routing occurs via the shortest source-target path and focus on algorithms for flow control.}


\todo{Flow control determines when and how much data can be sent through the network.} Engineered networks are often designed based on the concept of oversubscription or blocking, where there is not enough capacity to simultaneously service every possible flow at maximum capacity due to bottlenecks in the network. Bottlenecks can cause \emph{congestion}, which results in data loss and reduction in useful throughput since when demand for a certain link exceeds capacity, the excess data must be queued or dropped. \todo{Effective flow control simultaneously maximizes the utilization of link capacity while maintaining low loss and delay. In general, flow control is an NP-hard problem~\citep{Wang2003,Fortz2000}. It is even more challenging in online systems that serve many flows concurrently, where traffic can change unpredictably, and where optimization must happen in real-time. Further, control logic must be implemented distributedly, with minimal communication between nodes.}


To address these challenges, we developed a new neuro-inspired distributed computing model where flow control occurs by modulating (increasing or decreasing) edge \emph{weights} using only 1-bit \todo{(binary)} local feedback. Edge weights represent a control variable that denotes how much data should flow along the edge at the current time. For example, if there are 10 units of data that want to travel from node $u$ to $v$, and if the weight of edge $(u,v)$ is 6, then 6 units can be successfully transmitted, and the other 4 units are either queued or dropped. \todo{There exists an optimal global weight distribution at every time step, which is dependent on the trade-off between maximizing data flow rates versus minimizing data drops and queueing.} To inform the direction of how weights should change to approach this distribution, the network relies on 1-bit local feedback between neighboring nodes, which indicates whether the data was successfully transmitted (without being queued or dropped). This feedback is used to increase or decrease edge weights to ensure that links are not under-utilized or overloaded, respectively.


How does this relate to synaptic plasticity in the brain? Building off prior work, we will argue that synaptic weight-update rules can be viewed as a distributed gradient descent process that attempts to find a weight distribution that optimizes a global objective~\citep{Bengio2015b,Bengio2015a}. While the computational models and feedback mechanisms used to trigger weight changes in engineered networks are clearly different than those used in the brain, the questions we consider here are: 1) The direction and magnitude of weight updates; i.e.\@ when and how much to increase and decrease; 2) How these local decisions affect global performance objectives in engineering (bandwidth, drops, queueing delay); and 3) Whether general principles for modulating edge weights may be found across engineered and neural systems.


Overall, this paper makes four contributions: 1) A new neuro-inspired distributed computing model to optimize traffic flow based only on edge weight updates and 1-bit feedback between adjacent nodes; 2) A casting of long-term potentiation (LTP) and long-term depression (LTD) in terms of distributed gradient descent dynamics; 3) Simulations, using simulated and real networks, and theoretical analysis of five classes of weight-update rules; and 4) Comparisons of the best performing classes with experimental data detailing the functional forms of LTP and LTD in the brain.

\subsection{Related work}

Congestion control in many distributed networks, such as the Internet, is performed on a global end-to-end basis \emph{per-flow}, meaning that the source of each flow regulates the rate at which it sends data into the network based on a binary feedback signal from the target node~\citep{Corless2016}. Our neuro-inspired \emph{per-link} model described below is stricter in its feedback constraints; each node can regulate traffic but based only on the congestion it observes on its incoming and outgoing links, independently of the source and target of the data. This model is more relevant to vehicular traffic networks (where it is impossible for a traffic control device, such as a traffic light, to know the ultimate destination of a vehicle) and network-on-a-chip circuits (where flits travel independently through the circuit, like vehicles). Traffic control algorithms have been analyzed in many forms, but largely assume the problem is centralized or offline~\citep{Gayme2011}. \todo{Other approaches attempt to emulate a centralized algorithm by passing large-sized messages~\citep{MateosNunez2013}}. Distributed gradient descent algorithms have been studied in many areas, but they also assume the ability to pass large and frequent messages, or they require significant data aggregation at individual nodes~\citep{Li2014,Zinkevich2010,Yuan2014}.

\section{A distributed network model for traffic flow control}

We are given a network $G = (V,E,W)$. The node set $V$ is partitioned into three non-overlapping subsets: sources $S \subset V$, targets $T \subset V$, and routers $R \subset V$. Data is transmitted from sources to target via the routers (Figure~1A). The edges $E$ are directed; for simplicity, we assume each source is connected to exactly one router (i.e.\@ a ``feeder road''), and each target has an incoming edge from exactly one router (i.e.\@ a ``highway exit''). The routers are connected with a uniform or scale-free degree topology. 


The weight $W_{uv}(t)$ of each edge corresponds to the maximum amount of traffic \todo{the flow control algorithm allows to travel} from $u\rightarrow v$ at time $t$. Each weight $W_{uv}(t) \in [1,C]$ for all $(u,v) \in E$ and for all $t$, where $C$ is defined as a fixed maximum capacity for each edge. Edge weights from routers to targets are always set to $C$ because there are no outgoing edges from the targets, and hence no possibility of downstream congestion. Weights of all other other edges will vary over time based on traffic flow and congestion.  

Each source desires to send $L \gg C$ units of data to one random target; hence, there are $|S|$ flows actively competing in the network and sending data. Data for each flow is routed from the source to the target via the shortest path in the network. In practice, this is easily accomplished using a distributed routing algorithm~\citep{Gavoille2001}\todo{\footnote{\todo{This model of routing is likely different than that occurring in the brain. Our work is not meant to derive a mapping between the two or suggest that their mechanisms are similar.}}} In each time step $t$, each source injects new data into the network, the amount of which is equal to the outgoing edge weight from the source to the router it is connected to. Thus, if this weight changes, so does the amount of new data injected into the network by the source. Data arriving at a router is forwarded to the next node along the shortest path to the target. If data arrives at router $u$ destined for node $v$, the amount of data actually sent is upper-bounded by the edge weight $W_{uv}(t)$, which can change over time. \todo{If multiple flows desire to use edge $u\rightarrow v$ in the same time step, we process each flow in a random order. This means that a random incoming flow is chosen, its data units are all sent (up to the link weight; the rest are dropped). Then another incoming flow is chosen at random, and its units are handled similarly.} A flow is complete when its source has successfully transferred $L$ data units to its target. 

\todo{The only control variables of the algorithm to achieve flow control are the edge weights $W$ at each time step.} Our objective is to:
\begin{align}
  \textrm{maximize}&\quad \sum_t \mathcal{U}_\mathcal{F}(W(t)) \\
  \textrm{subject to}&\notag\\
  &W_{uv}(t) \leq C \qquad\text{for all $t$ and $(u,v) \in E$}\notag
%
%
\end{align}
Here, $\mathcal{U}_\mathcal{F}$ is an objective function (described later) that measures how well the current edge weights perform when routing data from the flows $\mathcal{F}$ currently in the network. To enforce the constraint, we propose two application-dependent models for penalizing excess traffic on an edge: a \emph{drop} model and a \emph{queue} model (Figure~1B). Let $D_{uv}(t)$ be the total amount of data at router $u$ that desires to be passed to node $v$ at time $t$. If $D_{uv}(t) > W_{uv(t)}$ then the excess data is either dropped (i.e.\@ discarded), or it is queued at node $u$ until at least the next time step. Loss models are consistent with many data networks and neural circuits~\citep{Branco2009}. Queue models are consistent with transport networks and can help smoothen transient or bursty traffic. In each time step, data queued at a node is processed before non-queued data (i.e.\@ a first-in first-out buffer). We assume a queue of infinite length, but we penalize solutions that produce long queues.

Thus, at every time step, there exists an optimal global weight distribution for a given objective function. \todo{This optimum will change as traffic demand varies over time. The goal is to track this distribution as closely as possible by applying gradient descent on the edge weights. We use 1-bit feedback between adjacent nodes, indicating whether data was successfully sent from one node to the other (i.e.\@ if it resulted in no dropped or queued data). If the transmission is successful, data rate is increased (to probe whether higher bandwidth can be achieved). Conversely, if congestion is experienced, the data rate is decreased. This feedback thus serves as the direction of the local gradient (at the edge) for the global objective. By modifying the edge weight in accordance with this local gradient, we attempt to minimize the global objective. The magnitude of the edge weight change is based on both the direction of change and the current weight. Therefore, the algorithm seeks the point that locally maximizes traffic flow without triggering congestion.}

Next, we describe how synaptic plasticity rules can serve as inspiration towards regulating these edge weights in an online, activity-dependent manner using simple distributed computation.

\subsection{Synaptic plasticity as distributed gradient descent on edge weights}

Recent work by Bengio et al.\@ has argued that many forms of Hebbian learning, such as spike-timing dependent plasticity, may correspond to gradient descent towards neural activities that optimize a global objective~\citep{Bengio2015b,Bengio2015a,Osogami2015}. They propose that neurons perform ``inference'' to try and better predict future activity, given current and past data. To approach optimal activity levels, feedback signals between pre-and post-synaptic neurons, such as those used to trigger long-term potentiation (LTP) and long-term depression (LTD), cause firing rates to increase or decrease based on the gradient of the objective.

\todo{Under a connectionist assumption, where} neural activity is a function of the synaptic weights coupling neurons together, the state of the network can be described by the edge weights over the population of synapses (along with other presumed constants, such as activation functions). The evolution of the system can be described by how these weights change in response to activity. In our case, activity-dependent feedback signals between adjacent nodes (described in detail below) provides a measurement of the direction of the local gradient at the edge, which similarly triggers edge weights to increase or decrease. These weight changes thus correspond to a distributed gradient descent algorithm for finding a set of edge weights that maximizes a global objective. The movement towards the global optimal is complicated by the non-independence of weight changes (network effects) and the uncertainty in future inputs (non-stationarity of traffic).

One critical question then is: How much should the weight increase (``LTP'') or decrease (``LTD'') following feedback? We next describe experimental data detailing what forms these rules might take in the brain.

\subsection{Experimentally-derived weight update rules for LTP and LTD}

To inspire our search into different possible weight update rules, we surveyed the recent literature for \todo{models based on} experimental data (electrophysiology, imaging, etc.) that provided evidence of the functional forms of LTP and LTD (Table~\ref{tbl:rw}). We categorized rules into four classes: 1) Additive: the change in edge weight is based on an additive constant; 2) Multiplicative: the change is based on a multiplicative constant; 3) Weight-dependent: the change more generally is based on a function of the existing edge weight; and 4) Time-dependent: the change also depends on the history of recent edge-weight changes. In this paper, we focus on the first three classes. 

These rules will be used to derive a class of neuro-inspired distributed gradient descent algorithms for update edge weights, as described in the next section. Table~\ref{tbl:rw} is not meant to exhaustively list all forms of synaptic plasticity rules derived in the literature (see Discussion) but rather to provide some basic structure into possible, simple-to-implement rules and their parameters. 

\begin{sidewaystable}[h]
\begin{center}
\begin{tiny}
\caption{\textbf{Correspondence between synaptic plasticity rules with experimental support and network control algorithms used in engineering.} The ``---'' indicates lack of support. Reviews of many of these rules are provided by: \citet{Kepecs2002,Morrison2008,Watt2010,Froemke2010}. [1]: \citep{Song2000,Kopec2006}. [2]: \citep{Chiu1989,Corless2016}. \todo{History-dependent rules may also depend on the history of recent activity, instead of only the history of recent weights.}}
\begin{tabular}{lllll} \\ \toprule 
Plasticity & Rule & Equation & Brain & Engineering \\ \midrule
LTP & Additive              & $W(t+1) = W(t) + k_i$ & \citep{Song2000,Kopec2006} & \citep{Chiu1989,Corless2016} \\ 
    & Multiplicative  & $W(t+1) = W(t) \times k_i$ & --- & \citep{Chiu1989} \\ 
    & Weight-Dependent & $W(t+1) = W(t) \mathop{}^{+}_{\times} f(W(t))$ & \citep{Oja1982,Bi1998,vanRossum2000} & \citep{Bansal2001,Kelly2003,Ha2008}\\
    & History-Dependent     & $W(t+1) = f(W(t),\dots,W(t-\Delta(t)))$ & \citep{Bienenstock1982,Froemke2010,Cooper2012,Kramar2012} & \citep{Jin2001} \\ \midrule 
LTD & Subtractive           & $W(t+1) = W(t) - k_d$ & \citep{Song2000} & --- \\ 
    & Multiplicative  & $W(t+1) = W(t) \times k_d$ & \citep{Bi1998,vanRossum2000,Zhou2004} & \citep{Chiu1989,Corless2016} \\ 
    & Weight-Dependent & $W(t+1) = W(t) \mathop{}^{-}_{\times} f(W(t))$ & \citep{Oja1982} & \citep{Bansal2001,Corless2012} \\
    & History-Dependent     & $W(t+1) = f(W(t),\dots,W(t-\Delta(t)))$ & \citep{Bienenstock1982,Froemke2010,Cooper2012} & \citep{Jin2001} \\ 
%
\bottomrule
\end{tabular}
\begin{flushleft}
\end{flushleft}
\label{tbl:rw}
\end{tiny}
\end{center}
\end{sidewaystable}

\subsection{Distributed algorithms for updating edge weights}
\label{sec:alg}




First, to inform the direction of the weight change (increase or decrease), in each time step, we allow 1-bit \emph{feedback} between adjacent nodes. Let $\mathrm{Jam}_{vw}(t)$ be an indicator variable equal to $1$ if data at node $v$ is lost or queued due to congestion on edge $(v,w)$ at time $t$, and $0$ otherwise. Assume data for a flow is traveling from node $u$ to $v$ to $w$. \todo{When considering how to modify the weight of edge $(u,v)$, we need to consider what happens on the adjacent downstream edge $(v,w)$ where traffic is flowing. Intuitively, if $\mathrm{Jam}_{vw}(t)==1$, then the incoming edge weight $W_{uv}(t+1)$ should LTD since it contributed data to the jam. Further, the edge weight $W_{vw}(t+1)$ should LTP to attempt to alleviate the congestion.} If neither edge $(u,v)$ nor $(v,w)$ are jammed, then both should LTP (Figure~1C). Thus, the Jam term serves as a 1-bit measurement of the direction of the local gradient at edge $(u,v)$. Overall, the logic implemented at each edge $(u,v)$ is shown in Algorithm~1 below:

\begin{algorithm}
\caption{: Logic for applying LTP or LTD, implemented at each edge $(u,v)$}
\begin{algorithmic}
  \IF{$\mathrm{Jam}_{uv}(t)==1$}
    \STATE Apply LTP to $W_{uv}(t+1)$ 
  \ELSE
    \IF{$\mathrm{Jam}_{vw}(t)==1$ and $(u,v)$ contributed data that jammed $(v,w)$, for any $w$}
        \STATE Apply LTD to $W_{uv}(t+1)$ 
    \ELSIF{$\mathrm{Jam}_{vw}(t)==0$ for all $w$ that received data from $u$}
        \STATE Apply LTP to $W_{uv}(t+1)$ 
    \ENDIF
  \ENDIF
\end{algorithmic}
\end{algorithm}
%
%
%
We assume a node (in this case, $v$) can modify the edge weight of both its incoming and outgoing edges. In synapses, this may be achieved by modulating pre-synaptic release probability or number of post-synaptic receptors \todo{(e.g.\@~\citet{Costa2015,Markram2012,Yang2013,Fitzsimonds1997}) or other gating mechanisms~\citep{Vogels2009}}; in data networks, a node can pause incoming data by transmitting a signal and can pause outgoing data by simply not transmitting. If an edge gets both LTP and LTD signals in a time step, it default to LTD. In every time step, the weight of every edge that carries data will either LTP or LTD. 




Second, to determine the magnitude of the weight change, we consider the following weight-update rules for LTP and LTD:
\begin{eqnarray}
\label{eqn:ltp}
W_{uv}(t+1) & = &
  \begin{cases}
     W_{uv}(t) + k_i,\ k_i > 0 & \quad\textbf{LTP: Additive Increase (AI)}\\
     W_{uv}(t) \times k_i,\ k_i > 1 & \quad\textbf{LTP: Multiplicative Increase (MI)}\\
     W_{uv}(t) - k_d,\ k_d > 0 & \quad\textbf{LTD: Subtract Decrease (SD)}\\
     W_{uv}(t) \times k_d,\ 0 < k_d < 1 & \quad\textbf{LTD: Multiplicative Decrease (MD)}\\
  \end{cases} \nonumber
\end{eqnarray}

We consider four combinations of LTP and LTD: AIMD, AISD, MIMD, and MISD. Each of these algorithms has some theoretical or experimental basis, or both (Table~\ref{tbl:rw}). For example, AISD was proposed by~\citet{Kopec2006} and~\citet{Song2000}. AIMD was proposed by~\citet{vanRossum2000} and~\citet{Delgado2010}. Multiplicative decrease rules have been proposed by~\citet{Zhou2004}, amongst others. 

\todo{}

We also compare to an algorithm based on the classic Oja learning rule~\citep{Oja1982}:
\begin{eqnarray}
\label{eqn:oja}
W_{uv}(t+1) & = &
  \begin{cases}
     W_{uv}(t) + k_i\left(1-\frac{D_{uv}(t)^2}{W_{uv}(t)C}\right),\ k_i > 0 & \quad\textbf{LTP: Oja}\\
     W_{uv}(t) - k_d\left(1+\frac{D_{uv}(t)^2}{W_{uv}(t)C}\right),\ k_d > 0 & \quad\textbf{LTD: Oja}\\
  \end{cases} \nonumber
\end{eqnarray}
Unlike the previous rules, Oja uses the activity (traffic) of the edge as a variable, where $D_{uv}(t)$ is the amount of data traversing edge $(u,v)$ at time $t$. \todo{This rule is slightly different from the typical Oja rule where the change in the weight for the $i$\textsuperscript{th} input, $\Delta W_i=\alpha(x_iy-y^2W_i)$ (learning weight $\alpha$, synaptic input $x$ and output $y$). The squared term functions as a decay on the weight. We include a similar activity-dependent squared decay term $(D_{uv}(t)^2/W_{uv}(t))$, but we normalize it by $C$ to lie within the required weight range. Our term decreases the effect of LTP and increases the effect of LTD as the link approaches capacity. This allows more aggressive $k_i$ and $k_d$ to quickly adjust traffic.}

Another algorithm we compare is called Bang-Bang control. This rule is often used in neural circuit design~\citep{Feng2003,Zanutto2007} and in engineering~\citep{Lazar1983} to control and stabilize activity:
\begin{eqnarray}
\label{eqn:other}
W_{uv}(t+1) & = &
  \begin{cases}
     C & \quad\textbf{LTP: Bang-Bang}\\
     1 & \quad\textbf{LTD: Bang-Bang}\\
  \end{cases} \nonumber
\end{eqnarray}
Finally, we compare to a baseline rule, Max Send, which keeps all edge weights fixed at $C$ in every time step. For all algorithms, if a weight equals $C$ and is triggered to LTP, it stays at $C$. Likewise, if a weight equals $1$ and is triggered to LTD, it stays at $1$.

\todo{We only consider integer units of data, thus for all update rules, $W$ is rounded to an integer. To prevent a link from getting stuck at 0 weight (e.g.\@ for MISD), it was required that every weight have a minimum of 1. The link capacity $C \gg 1$, so the integer rounding and minimum value were negligible in terms of overall performance.}

\subsection{Objective functions, simulations, and data}
\label{sec:eval}

Next, we describe network performance measures to quantify how well these rules behave towards optimizing global objectives ($\mathcal{U}_\mathcal{F}$). The objective functions we selected are typically used to evaluate performance in engineered networks\todo{~\citep{Pande2005,Ahn1995}}. 

Let $\mathcal{F}$ define the set of $|S|$ competing flows, and $L$ be the load of each flow. The three objective functions are:
\begin{itemize} 
\item \underline{Bandwidth}: Amount of data successfully transferred per time step, averaged over all flows: $|\mathcal{F}|^{-1} \sum_i L / \mathrm{time}(\mathcal{F}_i)$, where $\mathrm{time}(\mathcal{F}_i)$ is the number of time steps for flow $i$ to complete. 
\item \underline{Drop Penalty}: Percentage of data lost, averaged over all flows: $|\mathcal{F}|^{-1} \sum_i \mathrm{lost}(\mathcal{F}_i) / L$, where $\mathrm{lost}(\mathcal{F}_i)$ is the amount of data lost by flow $i$ over all time steps until completion. The drop penalty may go above 100\% if more data sent by the source is lost than delivered.
\item \underline{Queue Penalty}: The percentage of data that is queued per hop, averaged over all flows: $|\mathcal{F}|^{-1} \sum_i \mathrm{queued}(\mathcal{F}_i) / (L \times \mathrm{Path}(\mathcal{F}_i))$, where $\mathrm{queued}(\mathcal{F}_i)$ is the total amount of data inserted into queues and $\mathrm{Path}(\mathcal{F}_i)$ is the path length of flow $i$.
\item \todo{\underline{Parameter robustness}: One critical component of these algorithms is that they must work well in general. Traffic is highly dynamic, and thus optimizing parameters for one particular traffic regime or network topology will not be sufficient for real-world use. We thus focus on the robustness or sensitivity of each algorithm by testing the variability in their performance across a broad range of parameters.}
\end{itemize} 

\noindent \textbf{Simulation framework.} We created a directed network with $N$ sources, $N$ targets, and $N$ routers, where $N=100$ or $1000$. \todo{Each source is connected to exactly one random router (the same router can have an edge from multiple different sources)}, and each target has an incoming edge from one random router. The router-router network was defined using a uniform or scale-free degree topology with each router connected to six other routers~\citep{Corless2016}. We defined $N$ concurrent flows, one starting from each source, and each selecting a random target (the same target may be selected twice across flows). Each flow contained $L = 100 \times C$ data units, as $100$ is roughly the average number of round-trip times taken for a data transfer on the Internet~\citep{Corless2016}. The weight of each edge was initialized to the maximum capacity, $C$, in order to immediately experience contention. \todo{The mean path length of the artificial network was 4.7; this is large enough to provide several links of potential contention.} Each performance measure was averaged over 25 repeat simulations.\\
\ \\ 
\noindent \textbf{Parameter variation.} For the weight-update rules, we varied: AI ($k_i \in [1.0,9.0]$), SD ($k_d \in [1.0,9.0]$), MI $(k_i \in [1.1,1.9]$), MD $(k_d \in [0.1,0.9]$).\\
\ \\
\noindent \textbf{Real-world data.} We used the CAIDA Autonomous System (AS) relationship data \todo{to generate a graph based on the Internet routing network}~\citep{Caida}. Each AS represents the highest-level routing subnetwork on the Internet. The CAIDA data contained connectivity between 53,195 AS subnetworks. We treat each AS as a single routing node in our model. We created the same number of sources and targets (53,195 each); each source was connected to one random AS and each target had one incoming edge from a random AS. The network contained 537,582 total directed edges. To simulate flows, we selected $\sim1\%$ of the sources (500) and paired each source with a random target. Each performance measure was averaged over 10 repeat simulations. \todo{As before, we set the capacity, $C=1000$.}

\section{Results}

First, we describe the performance of each weight-update algorithm against the global objectives (bandwidth, drops, and queueing) using both simulated and real-world networks. Second, we support these results by analytically deriving the performance of each algorithm as it adapts to changing traffic demands. Third, we describe how the best performing rules compare to those commonly used in engineering.

\subsection{Observations from simulations and real-world network flows}
\label{sec:simresult}


We first compared the performance of the seven edge-weight update algorithms (Section~\ref{sec:alg}) via simulation. Each algorithm was evaluated according to the bandwidth offered and the amount of data that was dropped or queued. 



The two strongest performing algorithms were AIMD and MIMD, with Oja lying in between (Figure~2). AIMD, for some parameters, achieved a bandwidth comparable to other algorithms, but its main strength was in reducing the drop penalty by at least one order of magnitude (averages: $\textrm{AIMD}=6.4\%$, $\textrm{OJA}=69.2\%$, $\textrm{MIMD}=69.3\%$, $\textrm{AISD}=124.4\%$, $\textrm{MISD}=145.2\%$; Figure~2A). The former as due to its conservative rule for increasing edge weights for LTP (additive), and the latter was due to its aggressive edge weight decrease for LTD (multiplicative). AISD and MISD showed very high drop penalty primarily because upon contention, edge weights were only decreased subtractively; this led to slow adaptation, though higher bandwidth because few links were ever under-utilized. In general, the Oja rule (a variant of AISD) improved over AISD but still achieved a much higher drop penalty compared to AIMD also due to its conservative decrease. MIMD showed great sensitivity to algorithm parameters, some of which performed well. While keeping all edge weights at maximum capacity may be intuitively appealing (Max Send), this is in general not a good solution because any downstream bottleneck will result in massive data drops. We also observed similar trends for all algorithms using the queue model (Figure~2B). 

Overall, for both models, the multiplicative decrease algorithms (AIMD, MIMD) and the Oja algorithm demonstrated a better trade-off between bandwidth and drop/queue penalties than other algorithms, indicating their ability to more closely approach the optimal edge weights. We also tested these observations for larger networks with different router connectivity topologies and observed similar overall trends (Appendix, Figure~S1).



Next, we tested how well and how quickly each algorithm could adapt to new traffic, simulated on a real Internet backbone routing network. The simulation was for $3000$ time steps: During $t < 1000$ and $t > 2000$, 500 flows concurrently competed. During $1000 \leq t \leq 2000$, 500 additional flows were temporarily added (``rush hour''). All algorithms demonstrated some reduction in bandwidth when rush hour begins due to the additional number of competing flows. \todo{However, AIMD and MIMD incurred the least transient drop penalty (i.e.,\@ the drop penalty incurred immediately after rush hour starts; Figure~3B). Overall, AIMD yielding significantly lower transient drop penalty than all other algorithms ($P < 0.01$; 2-sample t-tests), and significantly lower overall drop penalty than MIMD.} AIMD and MIMD also produced only a 4--5\% difference in bandwidth compared to the other algorithms (average data per timestep: \todo{$\textrm{AIMD}=776 \pm 6.82$, $\textrm{MIMD}=786 \pm 6.94$, $\textrm{OJA}=814 \pm 19.38$, $\textrm{AISD}=819 \pm 6.11$, $\textrm{MISD}=820 \pm 5.93$ at $t=1500$, Figure~3A).} These results suggest that AIMD and MIMD adapt faster to changing traffic, yielding fewer transient drops with comparable bandwidth.


Next, to support these observations, we formally analyze the adaptive behavior of each algorithm.

\subsection{Analyzing transient response times for AIMD, MIMD, AISD, MISD, and Oja}

An important aspect of algorithm performance is its non-stationary or transient behavior; i.e.\@ how well it adapts when traffic suddenly increases and the available bandwidth per flow decreases (Figure~3). Real networks are never static; all flows experience some level of perturbation due to varying traffic. Thus, as opposed to analyzing convergence properties (as is typically done for gradient descent algorithms), we analyzed a simple but informative scenario: the performance of each algorithm when a second flow is added to a link that is initially serving only a single flow at maximum capacity. 

Assume the link $(u,v)$ under consideration has fixed weight $C$, and that there are two flows starting from $s_1$ and $s_2$ that both need to use $(u,v)$ to reach their downstream target (and no other link in the network is limiting). Let $W_{s_{1}u}(t)=C$ and then when the second flow begins, assume $W_{s_{2}u}(t)=1$, where $C \gg 1$. Assume a single LTD operation on $(u,v)$ will lower the total traffic sent by both flows along $(u,v)$ to be below $C$, hence alleviating the congestion. After the initial congestion event there will be $\geq 1$ time steps of LTP before congestion re-occurs. Let $n$ be the number of time steps before congestion re-occurs. We wish to find the amount of data dropped/queued (called \emph{overshoot}) at time $n$, defined as the difference between the amount of data desired by both flows along edge $(u,v)$ and $C$ at the moment LTD is re-activated. 
\begin{thm}[Overshoot of AIMD]\label{thm:AIMDos} The two-flow transient response of AIMD has an overshoot that increases linearly with $n$, but is \todo{reduced by a term proportional to the capacity, $C$.}
\end{thm}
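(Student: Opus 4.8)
The plan is to track the two controllable weights through the three phases of the transient: the initial congestion event, the single corrective LTD, and the subsequent LTP ramp up to re-congestion. Write $A(t) = W_{s_{1}u}(t)$ and $B(t) = W_{s_{2}u}(t)$ for the per-flow weights governing the demand placed on the shared link $(u,v)$, whose own weight is fixed at $C$. The aggregate demand on $(u,v)$ is then $D_{uv}(t) = A(t) + B(t)$, and congestion occurs exactly when this exceeds $C$. The initial conditions supplied by the statement are $A = C$ and $B = 1$, so at the onset the demand is $C + 1 > C$ and a jam is registered on $(u,v)$.

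Next I would apply the two AIMD updates in sequence. Because both incoming edges contributed to the jam on $(u,v)$, Algorithm~1 applies LTD to each of them, and for AIMD this is the multiplicative rule, sending $A \mapsto C k_d$ and $B \mapsto k_d$, so the demand immediately after the LTD is $(C+1)k_d$. By hypothesis this single LTD drops the demand below $C$, which is precisely the condition $(C+1)k_d < C$. During the ensuing uncongested phase neither incoming edge sees a downstream jam, so both receive LTP (additive increase) in every step; after $m$ steps of LTP the weights are $A = C k_d + m k_i$ and $B = k_d + m k_i$, giving aggregate demand
\[ D_{uv}(m) = (C+1)k_d + 2 m k_i . \]
Congestion re-activates LTD at the first $m = n$ for which $D_{uv}(n) > C$, and the overshoot is by definition $D_{uv}(n) - C$. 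Substituting yields
\[ \mathrm{overshoot} = 2 k_i\, n - C(1 - k_d) + k_d , \]
which is linear in $n$ with slope $2 k_i$ and is offset by the term $-C(1 - k_d)$ proportional to $C$, as claimed.

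The computation is short, so the real work is justifying the structural facts that make it valid. The point I expect to dwell on is the factor of $2$ in the slope: it reflects that both flows ramp simultaneously under LTP, which requires checking that during the uncongested phase the feedback logic of Algorithm~1 indeed triggers LTP on both $(s_1,u)$ and $(s_2,u)$ rather than on only one. Second, I must confirm that neither individual weight saturates at the cap $C$ before re-congestion: comparing the step counts shows $A$ would reach $C$ only near $m \approx C(1-k_d)/k_i$, whereas the aggregate crosses $C$ near $m \approx C(1-k_d)/(2 k_i)$, i.e.\ at half as many steps, so the cap never binds in this regime and the affine recurrence is exact. Finally I would note that the integer-rounding and minimum-weight-of-one conventions from the model perturb these expressions by at most an additive constant, which is negligible against the $2 k_i n$ and $C(1-k_d)$ terms when $C \gg 1$; this is what licenses the clean closed form above.
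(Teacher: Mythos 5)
Your proposal is correct and follows essentially the same route as the paper's proof: track both flows' weights through the initial jam, the single multiplicative LTD, and the $n$-step additive LTP ramp, then compute the excess demand over $C$ at re-congestion, yielding the identical expression $2nk_i - C(1-k_d) + k_d \approx 2nk_i + C(k_d-1)$. Your added checks (that both upstream edges LTP simultaneously, that neither weight saturates at $C$ before the aggregate re-congests, and that rounding effects are negligible) are sensible tightenings of steps the paper leaves implicit, but they do not change the argument.
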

\begin{proof}
At the first time step ($-1$ to simplify notation), the amount desired by both flows on $(u,v)$ is:
\begin{center}
\begin{tabular}{cccc}
Time $t$ & Flow 1 & Flow 2  & $\mathrm{Jam}_{uv}(t)$ \\ \midrule
$-1$ & $C$ & $1$ & True \\
\end{tabular}
\end{center}
Since the sum of the flows is greater than $C$, congestion occurs and the jam indicator variable $\mathrm{Jam}_{uv}(t)$ is true. In the next step, LTD via multiplicative decrease is applied:
\begin{center}
\begin{tabular}{cccc}
Time $t$& Flow 1 & Flow 2  & $\mathrm{Jam}_{uv}(t)$\\ \midrule
$0$ & $Ck_d$ & $k_d$ & False    \\
\end{tabular}
\end{center}
This brings the total desired traffic along $(u,v)$ under $C$. Additive increase then occurs for $n$ steps: 
\begin{center}
\begin{tabular}{cllc}
Time $t$& Flow 1 & Flow 2  & $\mathrm{Jam}_{uv}(t)$ \\ \midrule
$1$ & $C k_d + k_i$ & $k_d+k_i$ & False \\
$2$ & $C k_d + 2k_i$ & $k_d+2k_i$ & False\\
\vdots & \vdots & \vdots & \vdots \\
$n$ & $C k_d + n k_i$ & $k_d+nk_i$ & True \\
\end{tabular}
\end{center}
At time step $n$, the overshoot along $(u,v)$, which is the excess traffic over $C$, is:
\begin{align}
\label{eqn:aimdexactOS}
(\mathrm{Flow\ 1} + \mathrm{Flow\ 2}) - C & = (C k_d+n k_i + k_d + n k_i) - C, \notag \\
& \approx 2 nk_i + C\left(k_d-1\right),
\end{align}
assuming the single $k_d$ term is negligibly small. Note that since $k_d\in (0,1)$, the second term is negative.
\end{proof}
%
%
\begin{table}[htb] 
\begin{center}
\begin{footnotesize}
\caption{\textbf{Analysis of overshoot of weight-update algorithms.} Analytic (left) and simulated (right) overshoot values of each algorithm using $C=1000$. The Oja-based algorithm always under-shoots for the simple two-flow case; however, a closed form solution appears difficult to derive because the algorithm is dependent on edge traffic. \todo{The optimal solution (OPT) has zero overshoot.} For simulations of other algorithms, we selected 3 sets of parameters: \emph{\underline{Balanced}}: additive ($k_i=1.0, k_d=5.0$), multiplicative ($k_i=1.1,k_d=0.5$). \emph{\underline{Aggressive Increase}}: additive ($k_i=100.0, k_d=5.0$), multiplicative ($k_i=1.5,k_d=0.5$). \emph{\underline{Aggressive Decrease}}: additive ($k_i=1.0, k_d=100.0$), multiplicative ($k_i=1.1,k_d=0.1$). \todo{The variable $n$ corresponds to the number of time steps before congestion re-occurs.}}
\begin{tabular}{lc|ccc}\label{tbl:overshootsummary} \\ \toprule 
Rule & Transient Overshoot & Balanced ($n$) & Aggressive Increase ($n$) & Aggressive Decrease ($n$) \\ \midrule
AIMD & \(\displaystyle 2nk_i + C \left(k_d -1 \right) \) & 0.5 (250) & 101 (3) & 0.1 (450)\\
AISD & \(\displaystyle 2\left(nk_i-k_d\right) \) & 1.0 (5) & 196 (1) & 1.0 (100)\\
MIMD & \(\displaystyle C\left(k_d k_i^n - 1\right) \) & 72.8 (8) & 126 (2) & 84.6 (25)\\
MISD & \(\displaystyle C\left(k_i^n-1 \right) \) & 95.6 (1) & 494 (1) & 90.2 (2)\\ 
OJA  & Undershoots & -1 ($\infty$) & -1 ($\infty$) & -1 ($\infty$)\\
\todo{OPT}  & \todo{0 ($\infty$)}           & \todo{0 ($\infty$)}             & \todo{0 ($\infty$)}             & \todo{0 ($\infty$)}\\
\bottomrule
\end{tabular}
\end{footnotesize}
\end{center}
\end{table}

We similarly derived the overshoot of MIMD, AISD, and MISD (Appendix; summary in Table~\ref{tbl:overshootsummary}). \todo{The theoretical performance of the algorithms on the two-flow case correlate well with the simulated performance using hundreds of concurrent flows in a larger network.} Both AI algorithms have an overshoot that has a $2nk_i$ term. For AIMD, this term is reduced by $C(k_d-1)$. AISD, on the other hand, is hurt by a slow subtractive decrease, and hence only reduces the $2nk_i$ term by $2k_d$. Since $|C(k_d-1)| > |2k_d|$ for large values of $C$, AIMD typically performs better (compare blue dots vs.\@ yellow dots in Figure~\ref{fig:2}). 

The MIMD overshoot shows a complex dependence between $k_i$, $k_d$ and $C$, which makes performance highly  parameter-dependent, as we also observe via simulation (see variability of red dots in Figure~2). MISD has uniformly high overshoot, since $k_i^n > 1$ is multiplied by a large constant $C$, and thus performs poorly. Finally, Oja uses an AISD rule with a weight-dependent squared decay; this decay cancels out the additive increase term as the traffic of Flow 1 and 2 approaches $C$. This leads to performance that always under-shoots for this simple two-flow case, improving drop and queue penalty over AISD; however, in practice when many flows concurrently compete and overshoot does occur (as in Figures~2 and 3), it is limited by the weak decrease term, as the decay only doubles $k_d$, at best.

We also simulated the transient overshoot under the assumptions of Theorems 1--4 using different parameter settings (Table~\ref{tbl:overshootsummary}, right side). These simulations further validate the theorems, showing that AIMD overshoots the least, while MIMD varies from the second-best to second-worst, depending on $k_i$ and $k_d$. Further analysis of each algorithm is provided in the Appendix.


\subsection{Comparing distributed gradient descent algorithms in the brain and the Internet}


Simulations and theory both suggest that AIMD achieves a robust and well-balanced trade-off between bandwidth and drop/queue penalties, with MIMD and Oja also performing comparably depending on the parameters selected. This implies that these algorithms approach the optimal global edge-weight distribution more quickly and accurately than other distributed gradient descent algorithms, including MISD, AISD, Bang-Bang, and Max-Send. 

In the brain, the additive increase and multiplicative decrease rule (AIMD) for LTP and LTD, respectively, has strong theoretical and experimental support (Table~\ref{tbl:rw}), particularly over MI and SD models. The AIMD algorithm is also very similar to the rule proposed by~\citet{vanRossum2000} and has been commonly referred to as the mixed-weight update rule~\citep{Delgado2010}. This rule, in neural network simulations, has been shown to produce stable Hebbian learning compared to many other rules~\citep{vanRossum2000,Billings2009}. 

While Oja was one type of weight-dependent rule, other rules have also been proposed where a weak (i.e.\@ low-weight) synapse that undergoes LTP is strengthened by a greater amount than a strong synapse that undergoes LTP, and vice-versa for LTD (Table~\ref{tbl:rw}). Prior work has also suggested that individual synapses may have a ``memory'' that allows for more sophisticated update rules to be implemented, including history-dependent updates~\citep{Subhaneil2013}. \todo{One such form of update is called integral control in engineering, which utilizes the time integral of a variable from $t=0$ to the present}. We show the general forms of these rules in Table~\ref{tbl:rw} but do not explore them further here.


Interestingly, in engineering, AIMD also lies at the heart of the most popular congestion control algorithm used on the Internet today: the transmission control protocol (TCP~\citep{Corless2016}). In contrast to our \emph{per-link} model, congestion control on the Internet is performed on a global end-to-end basis \emph{per-flow}, meaning that the source of each flow regulates its transmission rate based on a binary feedback signal (called an ACK or acknowledgment) sent by the target. If there is a long delay before the ACK is received (or if the ACK is never received at all), congestion is assumed, and the source decreases its rate of sending data by a multiplicative constant (often $0.5$). Otherwise, the source increases its rate by an additive constant (often $1.0$). TCP was also designed with the goal of converging to steady-state flow rates over time in a gradient-descent like manner~\citep{Shakkottai2007,Jose2015}. Thus, despite different models and objectives, both the brain and the Internet may have discovered a similar distributed algorithm for optimizing network activity using sparse, local feedback.


\section{Discussion}

Our work connects distributed traffic flow control algorithms in engineered networks to synaptic plasticity rules used to regulate activity in neural circuits. \todo{While we do not claim that there is a one-to-one mapping between mechanisms of synaptic plasticity and flow control problems, we showed how both problems can be viewed abstractly in terms of gradient descent dynamics on global objectives, with simple local feedback.} We performed simulations and theoretical analyses of several edge-weight update rules and found that the additive-increase multiplicative-decrease (AIMD) algorithm performed the best in terms minimizing drops/queueing with comparable bandwidth as other algorithms. This algorithm also matched experimental data detailing the functional forms of LTP and LTD in the brain and on the Internet, suggesting a similar design principle used in biology and engineering. Further, these weight rules use limited (1-bit) local communication and hence may be useful for implementing energy-efficient and scalable flow control in other applications, including integrated circuits, wireless networks, or neuromorphic computing. 

There are many avenues for future work. First, other plasticity rules may also be explored within our framework, such as short-term plasticity. \todo{Second, in cases where source and/or receiver rates are fixed, the payload needs to be routed over alternative paths (i.e. routes may change over time basic on traffic~\citep{Isa2015}). This requires  that heavily used edges become down-weighted and unused edges become more attractive, which effectively performs load balancing over all resources (edges) in the network. Biologically, similar behavior is observed due to homeostatic plasticity mechanisms, which may inspire algorithms for this problem.} Third, these distributed gradient descent updates rules may be useful in machine learning applications for non-stationary learning. Fourth, more sophisticated weight-and history-dependent update rules already explored in engineering may provide insight into their form and function in the brain. Overall, we hope our work inspires closer collaborations between distributed computing theorists and neuroscientists~\citep{Navlakha2015}.

\clearpage
\section*{Figures}

\begin{figure}[h]
\begin{center}
\includegraphics[width=\textwidth]{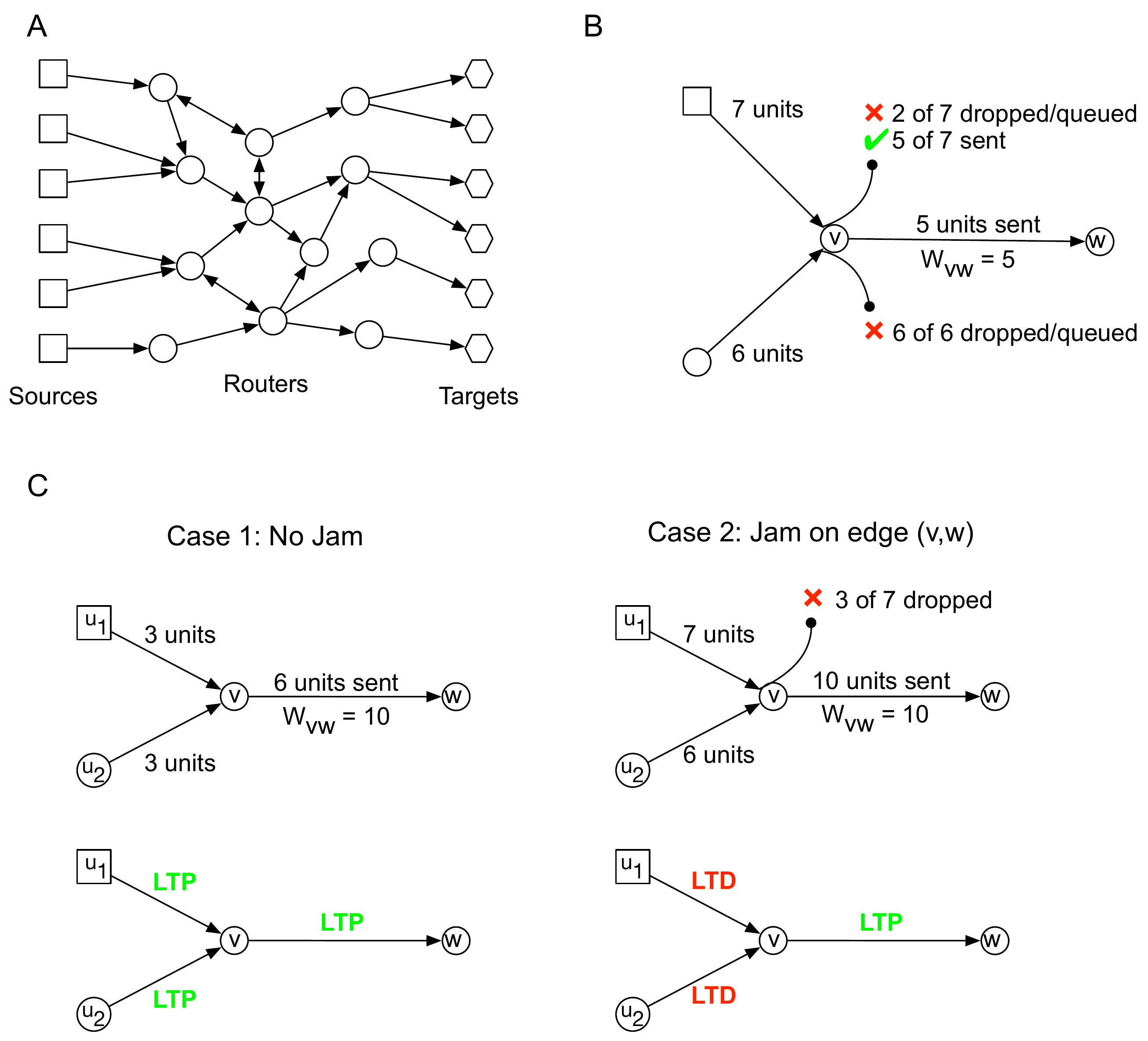}
\end{center}
\caption{\textbf{Model overview.} A) Input network, consisting of sources that transmit data to targets via a routing network. B) Illustration of a congested link $(v,w)$, where the amount of incoming data to a node exceeds outgoing link weight, leading to dropped or queued data. Flows incoming to node $v$ are processed in a random order. \todo{C) If no jam occurs (case 1), all links LTP in the next time step. If a jam does occur (case 2), then the edges contributing data to the jam undergo LTD, and the jammed link LTPs, in an attempt to alleviate the jam in the next time step.}}
\label{fig:1}
\end{figure}

\clearpage

\begin{figure}
\begin{center}
\includegraphics[width=\textwidth]{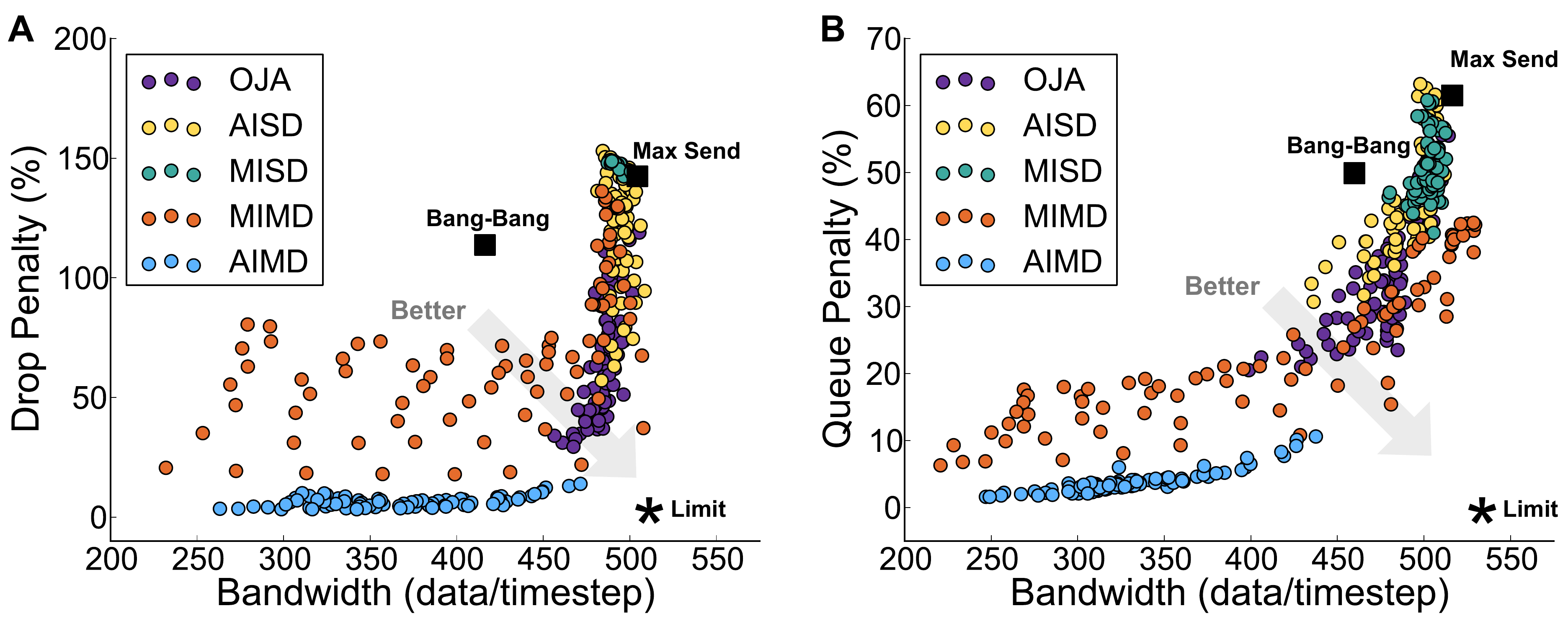}
\end{center}
\caption{\textbf{Comparison of seven activity-dependent weight update rules.} Bandwidth vs A) Drop penalty and B) Queue penalty. \todo{The lower right of the plot corresponds to an empirical upper bound, which occurs when bandwidth is that of the highest observed algorithm but with zero drop/queue penalty.} \todo{Each dot corresponds to an algorithm run using different values of the increase and decrease parameters $(k_i, k_d)$.} AIMD, MIMD, and Oja perform the best. 
\label{fig:2}}
\end{figure}

\clearpage

\begin{figure}
\begin{center}
\includegraphics[width=\textwidth]{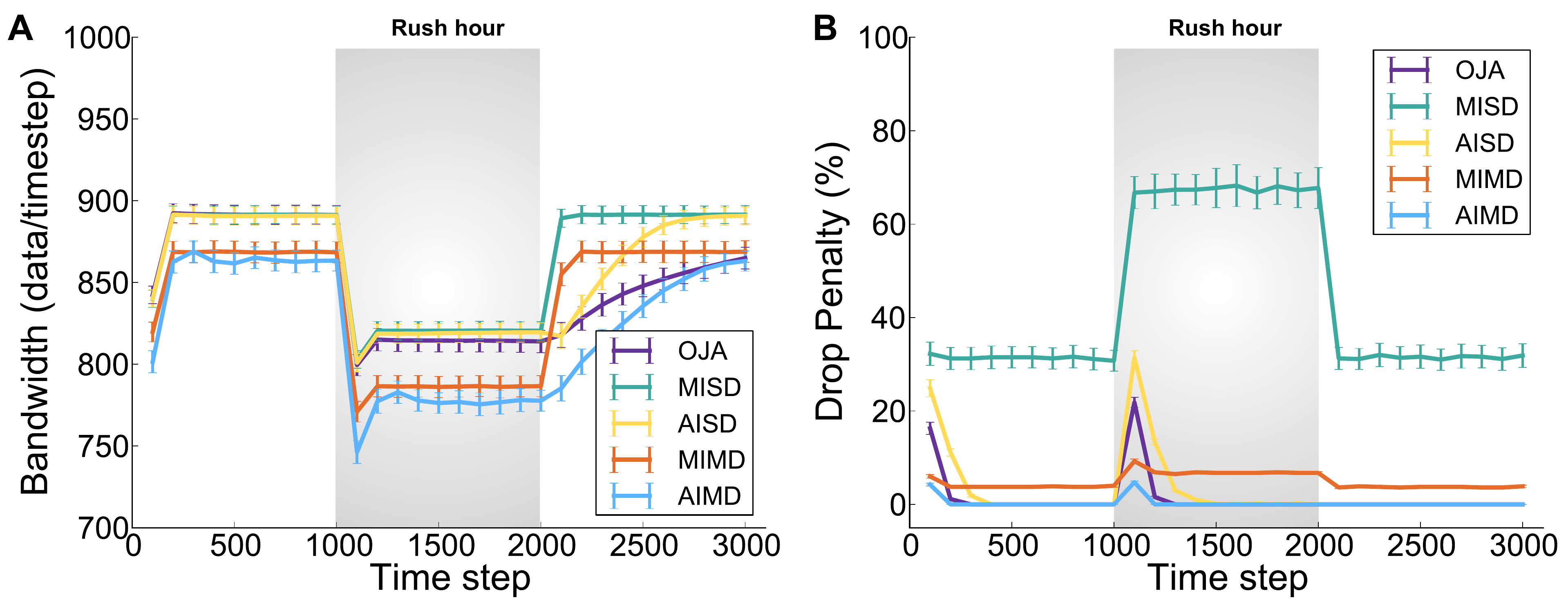}
\end{center}
\caption{\textbf{Adaptation to changing traffic demands.} Traffic simulated on a real Internet backbone routing network for $3000$ time steps. Performance measures were averaged every $100$ time steps. We selected the parameters for each algorithm that had the highest bandwidth in Figure~2 while being within 1\% of the minimum drop penalty for the algorithm. AIMD and MIMD showed the least additional penalty (B) due to rush hour, suggesting quick adaptation, while also yielding a similar bandwidth (A) as MISD, AISD, and Oja.
\label{fig:3}}
\end{figure}

\clearpage
\section*{Appendix}

\subsection*{Analyzing transient response times for AIMD, MIMD, AISD, MISD, and Oja}

Below, we derive the transient overshoot of AIMD, MIMD, AISD, and MISD.

\begin{thm}[Overshoot of AIMD] The two-flow transient response of AIMD has an overshoot that increases linearly with $n$, but is \todo{reduced by a term proportional to the capacity, $C$.}
\end{thm}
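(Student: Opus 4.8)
The plan is to track the amount of traffic desired by each of the two flows on the shared link $(u,v)$ through a single LTD event and the subsequent run of $n$ LTP steps, and then to read off the overshoot as the excess over $C$ at the exact moment congestion recurs. Because both flows share the same link, their desired-traffic values evolve in lockstep under the shared jam signal, so the whole argument reduces to maintaining a two-column ledger and summing it at step $n$. No convergence or fixed-point analysis is needed, only a direct unrolling of the recurrence defined by the AI and MD rules.

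First I would pin down the congestion event. At the step just before adaptation, Flow~1 desires $C$ and Flow~2 desires $1$; since $C+1>C$ the link is jammed and $\mathrm{Jam}_{uv}=1$. Applying AIMD's LTD rule — multiplicative decrease by $k_d\in(0,1)$ — multiplies \emph{both} flows' desired traffic by $k_d$, sending Flow~1 to $Ck_d$ and Flow~2 to $k_d$. The hypothesis that one LTD suffices to clear the jam is precisely the statement that $k_d(C+1)<C$, which I would invoke to guarantee a clean LTP phase begins here. Next I would iterate the AI rule: during LTP each flow's desired traffic grows by the additive constant $k_i$ per step, so after $m$ steps Flow~1 desires $Ck_d+mk_i$ and Flow~2 desires $k_d+mk_i$. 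Congestion recurs at the first step $m=n$ for which the sum again exceeds $C$.

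Finally I would compute the overshoot as $(\text{Flow 1}+\text{Flow 2})-C$ evaluated at step $n$, obtaining $2nk_i + C(k_d-1) + k_d$, and then drop the lone $k_d$ term as negligible relative to $C$ to arrive at the stated form $2nk_i + C(k_d-1)$. This immediately yields the theorem's two claims: the linear-in-$n$ growth is the combined additive-increase contribution $2nk_i$ from the two flows, and the capacity-proportional reduction is the term $C(k_d-1)$, which is negative because $k_d<1$. I do not expect a genuine obstacle in the computation itself; the only points requiring care are that the multiplicative decrease acts on both flows at once (so it is the large flow, starting at $C$, that produces the $Ck_d$ contribution and hence the $C(k_d-1)$ reduction), and that the count of $n$ LTP steps starts \emph{after} the single LTD step so the indices align. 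The one modelling assumption doing real work is that a single LTD clears the jam; were that to fail, one would have to account for repeated decreases before the LTP run, which would alter the leading behaviour.
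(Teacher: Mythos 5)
Your proposal is correct and follows essentially the same argument as the paper: the same ledger of the two flows' desired traffic through one multiplicative-decrease step followed by $n$ additive-increase steps, the same evaluation of the excess over $C$ at step $n$, and the same dropping of the lone $k_d$ term to obtain $2nk_i + C(k_d-1)$. Your explicit note that the one-LTD-clears-the-jam assumption amounts to $k_d(C+1)<C$ is a nice clarification, but it does not change the route.
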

\begin{proof}
See the main text for the derivation. The overshoot is:
\begin{align}
(\mathrm{Flow\ 1} + \mathrm{Flow\ 2}) - C & = (C k_d+n k_i + k_d + n k_i) - C, \notag \\
& \approx C\left(k_d-1\right) + 2 nk_i,
\end{align}
assuming the single $k_d$ term is negligibly small.
\end{proof}

\begin{thm}[Overshoot of MIMD] The two-flow transient response of MIMD is highly parameter dependent: based on $k_d$ and $k_i$, the scaling of overshoot can either be dominated by $C$ or by a power of $n$.
\end{thm}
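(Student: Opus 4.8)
The plan is to mirror the tabular derivation used for AIMD, replacing the additive increase with a multiplicative one so that each flow's demand on $(u,v)$ evolves as a geometric rather than an arithmetic progression. I would begin at time $-1$ with Flow~1 at $C$ and Flow~2 at $1$, so that their sum $C+1$ exceeds $C$ and $\mathrm{Jam}_{uv}(-1)$ is true. One multiplicative-decrease step at time $0$ scales both demands by $k_d$, giving Flow~1 $=C k_d$ and Flow~2 $=k_d$ with sum $k_d(C+1)<C$ by the assumption that a single LTD suffices. Each subsequent multiplicative-increase step multiplies both demands by $k_i$, so after $t$ probe steps Flow~1 $=C k_d k_i^{t}$ and Flow~2 $=k_d k_i^{t}$.

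Reading off the overshoot at the reconvergence step $t=n$ then gives
\[
(\mathrm{Flow\ 1}+\mathrm{Flow\ 2})-C \;=\; k_d k_i^{n}(C+1)-C \;\approx\; C\bigl(k_d k_i^{n}-1\bigr),
\]
where the lone Flow~2 term $k_d k_i^{n}$ is dropped as negligible against the $C$-scaled term, exactly as the single $k_d$ term was dropped in the AIMD derivation. This matches the MIMD entry of Table~\ref{tbl:overshootsummary} and is the geometric analogue of the AIMD expression $2nk_i+C(k_d-1)$.

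The substance of the theorem is the parameter-dependent dichotomy, which — exactly as in the AIMD statement, where the overshoot $2nk_i+C(k_d-1)$ is read as a function of $n$ — I would obtain by inspecting the two summands of $C k_d k_i^{n}-C$ with $n$ treated as a free exponent. The constant summand $-C$ is proportional to the capacity and independent of $n$, while $C k_d k_i^{n}$ grows like the power $k_i^{n}$. When $k_i$ is near $1$ and/or $n$ is small, $k_d k_i^{n}$ is of order one, the capacity term sets the scale, and the overshoot is dominated by $C$. When $k_i$ is appreciably larger than $1$ and/or $n$ is large, $C k_d k_i^{n}\gg C$ and the overshoot is dominated by the factor $k_i^{n}$, i.e.\ a power of $n$. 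I would locate the crossover by setting $k_d k_i^{n}\approx1$, exhibiting that the transition point is fixed jointly by $k_i$ and $k_d$.

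Because the recursion is a plain geometric progression, no step needs more than elementary algebra, so there is no hard computation; the only point requiring care is interpretive, namely the dual role of $n$. Operationally $n$ is not free but the least number of increase steps with $k_d k_i^{n}(C+1)>C$, and the pinch $k_d k_i^{n-1}\le C/(C+1)<k_d k_i^{n}$ confines $k_d k_i^{n}$ to roughly $(1,k_i]$, capping the realized overshoot near $C(k_i-1)$. I would reconcile this with the functional-form dichotomy by observing that it is precisely the size of $k_i$ — and through it the reconvergence time $n$ — that selects which regime the algorithm actually sits in: small $k_i$ keeps it $C$-dominated with large $n$, whereas larger $k_i$ pushes it toward the $k_i^{n}$-dominated regime with small $n$. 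This is exactly the strong parameter sensitivity the theorem asserts and that the spread of the MIMD points in Figure~\ref{fig:2} displays.
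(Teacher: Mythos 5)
Your derivation is correct and takes essentially the same route as the paper's proof: the identical two-flow table, the same geometric evolution $Ck_dk_i^{t}$ and $k_dk_i^{t}$ after a single multiplicative decrease, and the same final approximation $k_dk_i^{n}(C+1)-C \approx C\bigl(k_dk_i^{n}-1\bigr)$ for $C \gg 1$. Your closing discussion of the $C$-dominated versus $k_i^{n}$-dominated regimes, the crossover at $k_dk_i^{n}\approx 1$, and the operational pinch $k_dk_i^{n-1}\le C/(C+1)<k_dk_i^{n}$ elaborates on what the paper only asserts in its concluding sentence, but the mathematical core is the same.
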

\begin{proof}
Following the proof of Theorem~1 in the main text, the time evolution will be:
\begin{center}
\begin{tabular}{cllc}
Time $t$ & Flow 1 & Flow 2  & $\mathrm{Jam}_{uv}(t)$ \\ \midrule
$-1$ & $C$ & $1$ & True \\
$0$ & $C k_d$ & $k_d$ & False \\
$1$ & $C k_d k_i$ & $k_d k_i$ & False \\
$2$ & $C k_d k_i^2$ & $k_d k_i^2$ & False \\
\vdots & \vdots & \vdots & \vdots \\
$n$ & $C k_d k_i^n$ & $k_d k_i^n$ & True \\
\end{tabular}
\end{center}
The overshoot for MIMD is:
\begin{align}
(\mathrm{Flow\ 1} + \mathrm{Flow\ 2}) - C & = k_d k_i^n (C+1)-C.\notag \\
& \approx C (k_d k_i^n - 1),
\end{align}
if $C \gg 1$. Thus, the overshoot shows a positive dependence on $C$ (unlike AIMD, which has a negative dependence on $C$) and a power dependence on $n$.
\end{proof}

\begin{thm}[Overshoot of AISD] The two-flow transient response of AISD increases linearly with $n$, but does not scale relative to the link capacity, $C$.
\end{thm}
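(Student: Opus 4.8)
The plan is to reuse the time-evolution tableau from the proof of Theorem~1 essentially verbatim for the setup, changing only the single LTD line: where AIMD applies multiplicative decrease ($W \times k_d$), AISD applies subtractive decrease ($W - k_d$). So first I would record the initial congestion event at time $-1$ with Flow~1 desiring $C$ along $(u,v)$ and Flow~2 desiring $1$, so that $\mathrm{Jam}_{uv}(-1)$ is true since $C + 1 > C$.

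Next I would apply one LTD (subtractive) step to both flows, sending Flow~1 to $C - k_d$ and Flow~2 to $1 - k_d$; by the standing assumption that a single LTD clears the congestion, the total now lies below $C$ and $\mathrm{Jam}_{uv}(0)$ is false. Then I would run additive increase for $n$ steps, so that at time $t$ the two flows desire $C - k_d + t k_i$ and $1 - k_d + t k_i$ respectively, with congestion recurring at step $n$. Computing the overshoot at time $n$ gives
\[
(C - k_d + n k_i) + (1 - k_d + n k_i) - C = 1 - 2k_d + 2 n k_i \approx 2\left(n k_i - k_d\right),
\]
treating the single additive constant $1$ as negligible, which matches the entry in Table~\ref{tbl:overshootsummary}.

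The decisive observation — and really the entire content of the theorem — is that the $C$ appearing in Flow~1's initial value is carried through the subtractive decrease unchanged (it becomes $C - k_d$, not $C k_d$) and therefore cancels exactly against the $-C$ in the definition of overshoot. Unlike the multiplicative decrease of AIMD or MIMD, subtraction removes only the fixed amount $k_d$ from the dominant flow, so no term proportional to $C$ survives; what remains is the linear-in-$n$ term $2 n k_i$, reduced only by the $C$-independent constant $2k_d$. This is precisely the ``slow subtractive decrease'' effect the main text attributes to AISD.

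I expect the only genuine subtlety to be the treatment of Flow~2 under subtractive decrease: $1 - k_d$ formally falls below the minimum weight of $1$ whenever $k_d > 0$, so a strict reading would pin Flow~2 at $1$ and alter the constant term (and would also require $k_d > 1$ for a single LTD to relieve congestion in that branch). The cleanest route is to adopt the same continuous-weight relaxation used in the other overshoot derivations, justified by the paper's remark that the minimum value and integer rounding are negligible for $C \gg 1$. With that relaxation the $C$-cancellation argument proceeds unobstructed and yields the stated $2\left(n k_i - k_d\right)$.
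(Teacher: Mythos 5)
Your proposal is correct and follows essentially the same route as the paper: the identical time-evolution tableau (LTD once to $C-k_d$ and $1-k_d$, then $n$ additive-increase steps), the same overshoot computation $1 - 2k_d + 2nk_i \approx 2(nk_i - k_d)$, and the same observation that $C$ cancels. Even your caveat about Flow~2 dropping below the minimum weight of $1$ mirrors the paper's own remark, which notes that enforcing $W \geq 1$ changes the result only to $2nk_i - k_d$, a negligible difference for $n \gg 1$.
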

\begin{proof}
The time evolution will be:
\begin{center}
\begin{tabular}{cllc}
Time $t$& Flow 1 & Flow 2  & $\mathrm{Jam}_{uv}(t)$ \\ \midrule
$-1$ & $C$ & $1$ & True \\
$0$  & $C- k_d$ & $1-k_d$ & False    \\
$1$  & $C- k_d+ k_i$ & $1-k_d+ k_i$ & False \\
$2$  & $C-k_d+2 k_i$ & $1-k_d+2 k_i$ & False \\
\vdots  &   \vdots  & \vdots & \vdots\\
$n$ & $C-k_d+n k_i$ & $1-k_d+n k_i$ & True \\
\end{tabular}
\end{center}
The overshoot for AISD is thus:
\begin{align} \label{aisdos}
(\mathrm{Flow\ 1} + \mathrm{Flow\ 2}) - C & = 2(nk_i-k_d).
\end{align}
\end{proof}

\todo{In our theorems, we ignore the constraint that $W\geq 1$. This limit in our implementation was due to integer rounding to ensure MISD links do not get stuck at $0$ weight; hence, the theorems we present here are more general. If we apply this limit, it affects the weight of flow 2 at $t=0$, which should equal $1$. In all cases except AISD, the difference is removed by the approximation at the end. For AISD, Eqn.\@~\eqref{aisdos} becomes $2nk_i-k_d$, a negligible change when $n\gg 1$.}

%
%
\begin{thm}[Overshoot of MISD] The two-flow transient response of MISD increases to the power of $n$ and as a product with $C$.
\end{thm}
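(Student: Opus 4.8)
The plan is to mirror the two-flow bookkeeping used in the proofs of the AIMD, MIMD, and AISD overshoots, now specialized to the MISD rule: subtractive decrease ($W \mapsto W - k_d$) for LTD and multiplicative increase ($W \mapsto W k_i$, with $k_i > 1$) for LTP. I would start from the same initial configuration at time $-1$, where Flow~1 desires $C$ units and Flow~2 desires $1$ unit on edge $(u,v)$, so that the total exceeds $C$ and $\mathrm{Jam}_{uv}(-1)$ is true.

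At time $0$ a single LTD (subtractive decrease) fires on both flows, giving Flow~1 $= C - k_d$ and Flow~2 $= 1 - k_d$; by the standing assumption this pushes the total below $C$ and clears the jam. From time $1$ onward LTP (multiplicative increase) acts each step, so after $m$ increase steps the desired amounts are $(C - k_d)k_i^m$ and $(1 - k_d)k_i^m$. I would record these in an evolution table in the same format as the other proofs, with the jam recurring at step $n$.

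The overshoot is then the total desired traffic at time $n$ minus $C$:
\begin{align}
(\mathrm{Flow\ 1} + \mathrm{Flow\ 2}) - C & = (C - k_d)k_i^n + (1 - k_d)k_i^n - C \notag \\
& = k_i^n(C + 1 - 2k_d) - C.
\end{align}
Applying the same $C \gg 1$ approximation used throughout (so that $C + 1 - 2k_d \approx C$) collapses this to $C(k_i^n - 1)$, which exhibits both the power-of-$n$ growth through $k_i^n$ and the product with $C$ asserted in the statement.

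The algebra here is routine; the only point needing care is interpretive rather than computational. Unlike AISD, whose subtractive decrease keeps the capacity out of the overshoot entirely (giving $2(nk_i - k_d)$), the multiplicative increase here geometrically re-amplifies the large initial weight of Flow~1, so the $k_i^n$ factor multiplies $C$ directly; and unlike MIMD, there is no multiplicative factor $k_d < 1$ available to damp that growth. I would close by noting that this is precisely why MISD incurs the uniformly largest overshoot among the four rules.
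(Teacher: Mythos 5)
Your proposal is correct and follows essentially the same route as the paper's proof: the identical two-flow evolution table (subtractive decrease at $t=0$, then $n$ multiplicative-increase steps), the same exact overshoot $k_i^n(C+1-2k_d)-C$, and the same $C \gg 1$ approximation yielding $C(k_i^n - 1)$. Your closing interpretive remarks also match the paper's own discussion of why MISD performs worst.
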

\begin{proof}
The time evolution will be:
\begin{center}
\begin{tabular}{cllc}
Time $t$& Flow 1 & Flow 2  & $\mathrm{Jam}_{uv}(t)$ \\ \midrule
$-1$ & $C$ & $1$ & True \\
$0$ & $C- k_d$ & $1-k_d$ & False \\
$1$ & $(C-k_d)k_i$ & $(1-k_d) k_i$ & False \\
$2$ & $(C-k_d)k_i^2$ & $(1-k_d) k_i^2$ & False \\
\vdots  &   \vdots  & \vdots & \vdots \\
$n$ & $(C-k_d)k_i^n$ & $(1-k_d) k_i^n$ & True \\
\end{tabular}
\end{center}
The overshoot for MISD is thus:
\begin{align}
(\mathrm{Flow\ 1} + \mathrm{Flow\ 2}) - C & = k_i^n (C-2k_d+1) -C \notag \\
& \approx C\left( k_i^n -1 \right),
\end{align}
since $C$ is large relative to $2k_d+1$.
\end{proof}

General conclusions can be drawn from these relations that correlate well with the simulation results. We can bound the overshoot of both AI algorithms (AIMD and AISD) by $|\mathcal{F}| k_i$, where $|\mathcal{F}|$ is the number of flows that must share a link, because each individual flow will not overshoot more than one $k_i$. The more precise overshoot of AIMD, $C(k_d-1)+2nk_i$ (Theorem~1 main text), shows a significant capacity-dependent stabilizing factor, as $C\left(k_d-1\right)$, which is negative, counteracts the factor of $n$. When we repeated the AIMD flow simulation (Table~1 main text) with $C=50$ (as opposed to $C=1000$), the overshoot of AIMD increased, as expected.

AISD has a conservative increase similar to AIMD but suffers because subtractive decrease slowly adjusts when available bandwidth decreases. While some parameter settings may overcome this, it is difficult in practice to tune the SD constant to be effective in all scenarios. Simulation results supports this observation, showing a large drop penalty during rush hour and a slow recovery after traffic subsides (Figure~3 main text). For AISD, if multiple flows share a link, transient overshoot is approximately $|\mathcal{F}| \left( n k_i \right)$. When a large number of flows share a link, e.g.\@ millions of flows on an Internet backbone, overshoot will be very large.

The Oja-inspired algorithm is based on AISD, but subtracts a normalized traffic-dependent quadratic term. This attempts to correct for the slow LTD decrease of AISD, especially at high weights, and we do observe improvement over AISD in our network simulations. However, we also observe larger drop and queue penalties for Oja compared to AIMD due to its still rather conservative decrease following congestion. 
The decay term can also completely counteract the AI contribution when $D_{uv}\rightarrow \sqrt{WC}$ (i.e.\@ when edge utilization is high), thus potentially leading to edge under-utilization. This convergence to a weight less than $C$ causes LTD never to be triggered (hence, the $\infty$ term in Table 2 in the main text, for this simple two-flow case). The lack of periodic overshooting appears to be key for the drop and queue penalty improvements over AISD. 


The overshoot and performance of MIMD are highly dependent on $C$ relative to the $k_i$ and $k_d$ parameters, explaining the scattered performance of MIMD in Figure 2 (main text). With optimal parameter tuning, MIMD can be made to operate well under transient behavior, which is seen when certain MIMD points reach the AIMD region (Figure~3 main text). 

MISD can easily be seen as worse than AISD in terms of drop/queue penalties in our simulations (Figures~2 and 3 main text). Intuitively, this is because MISD reduces slowly (subtractively) but increases aggressively (multiplicatively). The transient overshoot analysis for MISD shows that it increases as a product of $C$ and a constant to the power of $n$. Since the decrease term is weak, $n$ will be small, meaning that poor performance will be especially seen for high capacity links.

\subsection*{Algorithm performance with additional topologies}

We performed simulations with 10-­fold larger networks $(n=1000)$, with both uniform and scale­-free degree distributions. \todo{The scale-free topology was derived using the Barabasi­-Albert preferential attachment model~\citep{Barabasi1999}. When generating graphs using this model, new nodes are connected to existing nodes with probability proportional to their existing degree. This mechanism has been shown to produce a power-law degree distribution.} We found no qualitative change in our conclusions here (Figure~\ref{fig:4} below) compared to the results discussed in the main text. Thus, our results are applicable to at least two classes of network topologies: uniform (inspired by grid-­like road networks) and power­-law (Internet). This invariance is likely due to the distributed nature of the flow control algorithms. The overshoot theorems exemplify this, having no assumptions on network size and topology.

\begin{figure}[!h]
\centerline{\includegraphics[width=\textwidth]{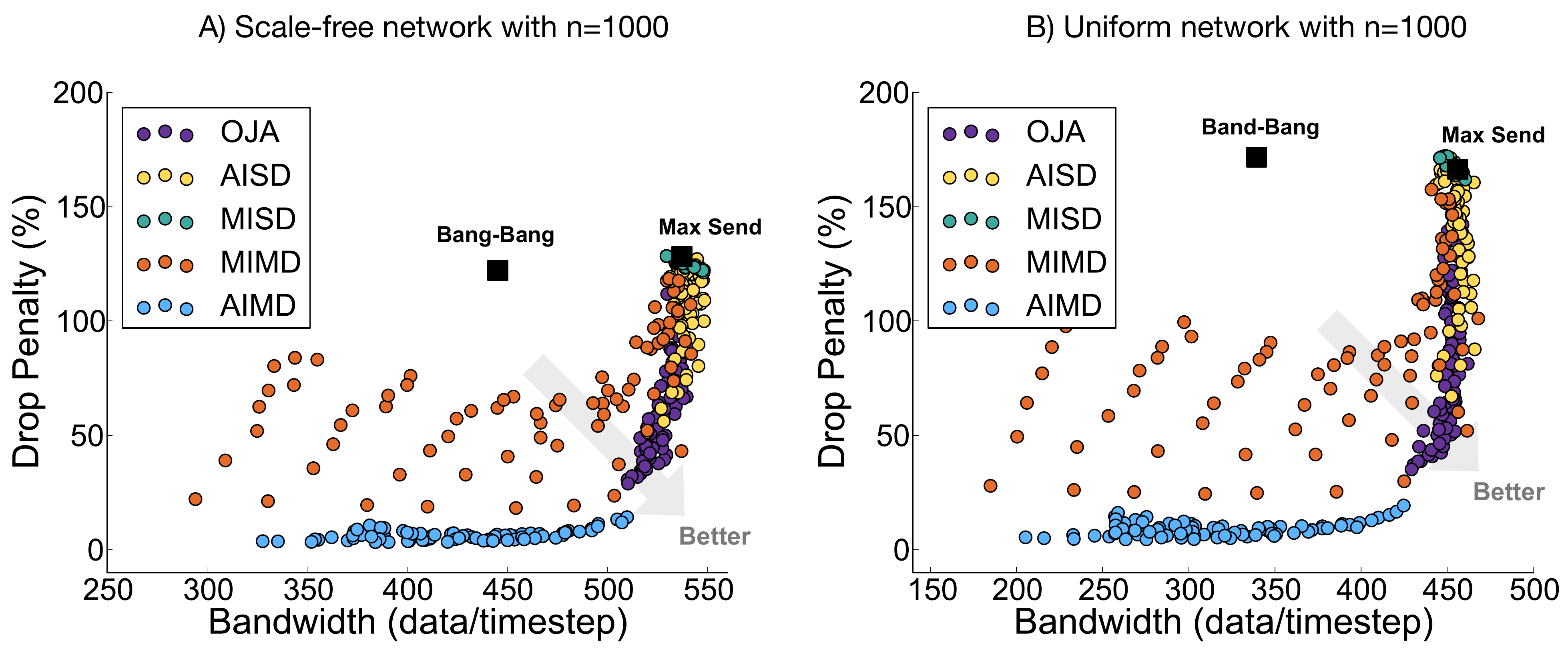}}
\caption{\textbf{Performance using additional topologies.} Comparison of all the algorithms using a scale-free degree distribution (A) and a uniform degree distribution (B) with 1000 nodes.
\label{fig:4}}
\end{figure}

\subsection*{\todo{Changes in edge weights (W) under dynamic traffic}}

\todo{To study how $W$ (edge weights) changed under the rush hour protocol of Figure~\ref{fig:3}, we plotted the average $W$ (computed in 10 time-step bins) for all used source-to-router links in the network. We focused on source-router links as these primarily control the amount of new data injected into the network in each time step. Error bars in this figure correspond to the standard error of the average over 10 trials. Both MIMD and AIMD reduce edge weights in response to rush-hour, showing that they handle excess traffic by reducing bandwidth instead of dropping data. The characteristic oscillatory probing nature of both algorithms is also apparent. In both cases, the range of W in each time step is narrowly bounded, indicating network stability.}

\begin{figure}[!h]
\centerline{\includegraphics[width=0.7\textwidth]{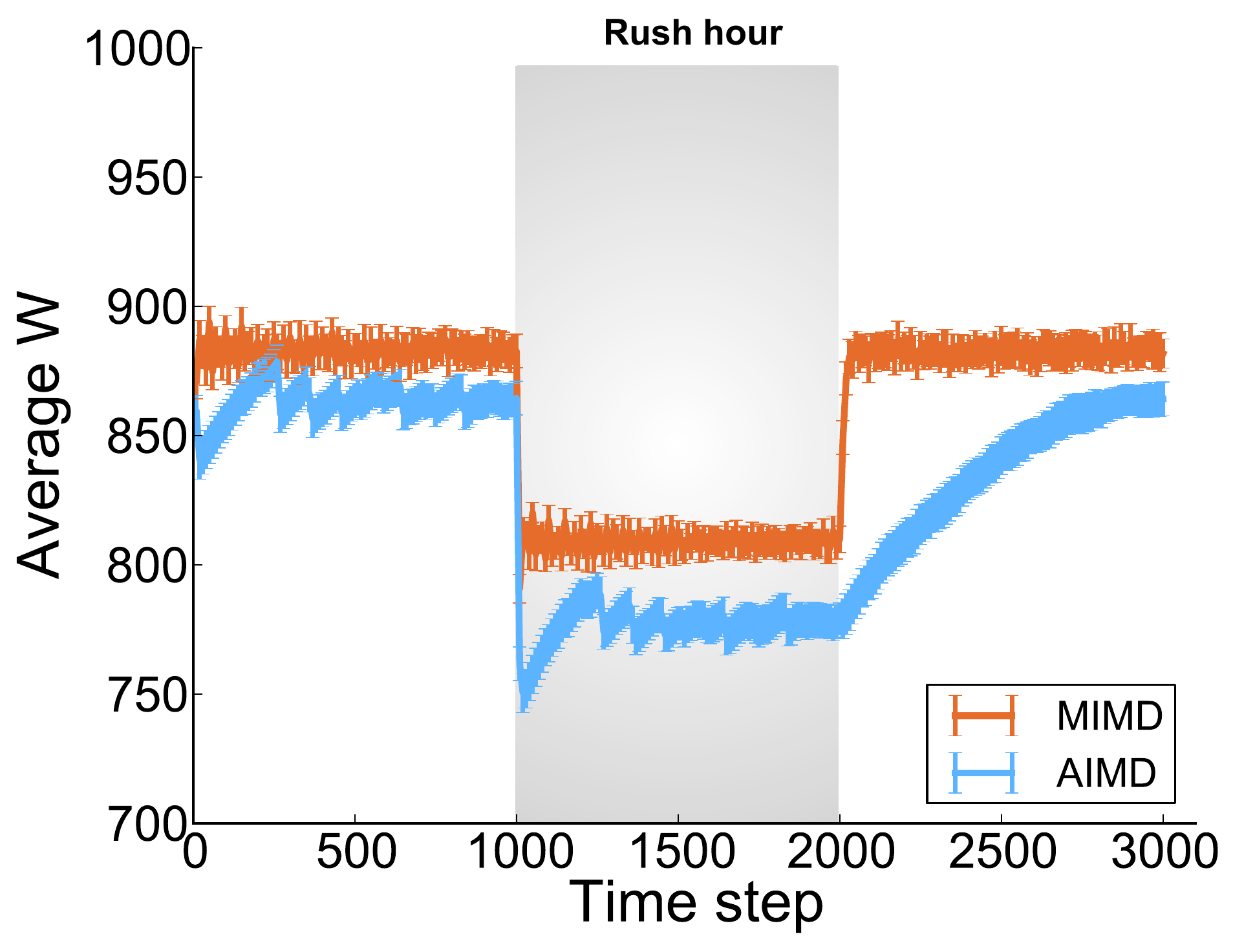}}
\caption{\todo{\textbf{Changes in the edge weight under the dynamic traffic protocol.} Error bars correspond to the standard error of the average over 10 trials.}
\label{fig:5}}
\end{figure}

\clearpage

\begin{small}
\bibliographystyle{apalike}
\bibliography{references}
\end{small}

\end{document}